\definecolor{shadecolor}{gray}{0.9}
\DeclareRobustCommand{\parhead}[1]{\textbf{#1}~}
\newcounter{parcount}
\lstdefinestyle{mystyle}{
    commentstyle=\color{OliveGreen},
    keywordstyle=\color{BurntOrange},
    numberstyle=\tiny\color{black!60},
    stringstyle=\color{MidnightBlue},
    basicstyle=\ttfamily,
    breakatwhitespace=false,
    breaklines=true,
    captionpos=b,
    keepspaces=true,
    numbers=left,
    numbersep=5pt,
    showspaces=false,
    showstringspaces=false,
    showtabs=false,
    tabsize=2
}
\DeclareRobustCommand{\parhead}[1]{\textbf{#1}~}
\lstdefinestyle{alp_style}{
    commentstyle=\color{OliveGreen},
    numberstyle=\tiny\color{black!60},
    stringstyle=\color{BrickRed},
    basicstyle=\ttfamily\scriptsize,
    breakatwhitespace=false,
    breaklines=true,
    captionpos=b,
    keepspaces=true,
    numbers=none,
    numbersep=5pt,
    showspaces=false,
    showstringspaces=false,
    showtabs=false,
    tabsize=2
}
\newtheorem{theorem}{Theorem}[section]
\theoremstyle{remark}
\newtheorem*{lemma*}{Lemma}
\def\eqref#1{equation~\ref{#1}}
\def\1{\bm{1}}
\def\rvk{{\mathbf{k}}}
\def\rvp{{\mathbf{p}}}
\def\rvv{{\mathbf{v}}}
\def\rvx{{\mathbf{x}}}
\def\ervp{{\textnormal{p}}}
\def\rmI{{\mathbf{I}}}
\def\rmJ{{\mathbf{J}}}
\def\rmK{{\mathbf{K}}}
\DeclareMathAlphabet{\mathsfit}{\encodingdefault}{\sfdefault}{m}{sl}
\SetMathAlphabet{\mathsfit}{bold}{\encodingdefault}{\sfdefault}{bx}{n}
\def\gD{{\mathcal{D}}}
\def\gH{{\mathcal{H}}}
\def\gO{{\mathcal{O}}}
\def\gX{{\mathcal{X}}}
 \newacronym{ALI}{ali}{adversarially learned inference}
\newacronym{BIGAN}{bigan}{bidirectional generative adversarial network}
\newacronym{VI}{vi}{variational inference}
\newacronym{KL}{kl}{Kullback-Leibler}
\newacronym{ELBO}{elbo}{evidence lower bound}
\newacronym{MCMC}{mcmc}{Markov chain Monte Carlo}
\newacronym{HMC}{hmc}{Hamiltonian Monte Carlo}
\newacronym{RNN}{rnn}{recurrent neural network}
\newacronym{MLP}{mlp}{feed forward neural network}
\newacronym{GAN}{gan}{generative adversarial network}
\newacronym{DCGAN}{dcgan}{deep convolutional generative adversarial network}
\newacronym{PresGAN}{presgan}{prescribed generative adversarial network}
\newacronym{DGM}{dgm}{deep generative model}
\newacronym{PGAN}{pgan}{prescribed generative adversarial network}
\newacronym{VEEGAN}{veegan}{vee {GAN}}
\newacronym{PACGAN}{pacgan}{packed {GAN}}
\newacronym{STYLEGAN}{stylegan}{Style {GAN}}
\newacronym{FID}{fid}{{F}r\'{e}chet {I}nception distance}
\newacronym{IS}{is}{{I}nception score}
\newacronym{ML}{ml}{machine learning}
\newacronym{VS}{vs}{vendi score}
\newacronym{NLP}{nlp}{natural language processing}
\newacronym{IntDiv}{intdiv}{{I}nternal {D}iversity}
\newacronym{BLEU}{bleu}{BLEU}
\newacronym{PAIRWISE-BLEU}{pairwise-bleu}{PAIRWISE-BLEU}
\newacronym{D-LEX-SIM}{d-lex-sim}{D-LEX-SIM}
\newacronym{GILBO}{gilbo}{GILBO}
\newacronym{NOM}{nom}{number of modes}
\newacronym{HMM}{hmm}{HMM}
\newacronym{AAE}{aae}{AAE}
\newacronym{VAE}{vae}{VAE}
\newacronym{JTN}{jtn}{JTN}
\newacronym{Char-RNN}{char-rnn}{Char-RNN}
\newacronym{SMILES}{smiles}{SMILES}
\newacronym{MNIST}{mnist}{MNIST}
\newacronym{MultiNLI}{multinli}{MultiNLI}
\newacronym{StackedMNIST}{stackedmnist}{StackedMNIST}
\newacronym{NLI}{nli}{NLI}
\newacronym{VDVAE}{vdvae}{VDVAE}
\newacronym{LSUN}{lsun}{LSUN}
\newacronym{CIFAR}{cifar}{CIFAR} 
\newacronym{FD}{fd}{Fr\'echet Distance}
\newacronym{KD}{kd}{Kernel Distance}
\title{\textbf{Cousins Of The Vendi Score: A Family Of Similarity-Based Diversity Metrics For Science And Machine Learning}}
\author[1, 2]{Amey P. Pasarkar}
\author[1, 2, *]{Adji Bousso Dieng}
\affil[1]{Department of Computer Science, Princeton University}
\affil[2]{\href{https://vertaix.princeton.edu/}{Vertaix}}
\affil[*]{Published in \emph{Artificial Intelligence and Statistics, AISTATS 2024}}
\begin{document}
\maketitle

\begin{abstract}
  \noindent Measuring diversity accurately is important for many scientific fields, including \gls{ML}, ecology, and chemistry. The Vendi Score was introduced as a generic similarity-based diversity metric that extends the Hill number of order $q=1$ by leveraging ideas from quantum statistical mechanics. Contrary to many diversity metrics in ecology, the Vendi Score accounts for similarity and does not require knowledge of the prevalence of the categories in the collection to be evaluated for diversity. However, the Vendi Score treats each item in a given collection with a level of sensitivity proportional to the item's prevalence. This is undesirable in settings where there is a significant imbalance in item prevalence. In this paper, we extend the other Hill numbers using similarity to provide flexibility in allocating sensitivity to rare or common items. This leads to a family of diversity metrics---\emph{Vendi scores} with different levels of sensitivity controlled by the order $q$---that can be used in a variety of applications. We study the properties of the scores in a synthetic controlled setting where the ground truth diversity is known. We then test the utility of the Vendi scores in improving molecular simulations via Vendi Sampling. Finally, we use the scores to better understand the behavior of image generative models in terms of memorization, duplication, diversity, and sample quality\footnote{Code can be found at \url{https://github.com/vertaix/Vendi-Score}.}.\\
 
 \noindent \textbf{Keywords:} Vendi Scoring, Diversity, Generative Modeling, Molecular Simulations, Ecology, Machine Learning
\end{abstract}

\section{INTRODUCTION}

\begin{figure*}[!hbpt]
    \includegraphics[width=\textwidth]{./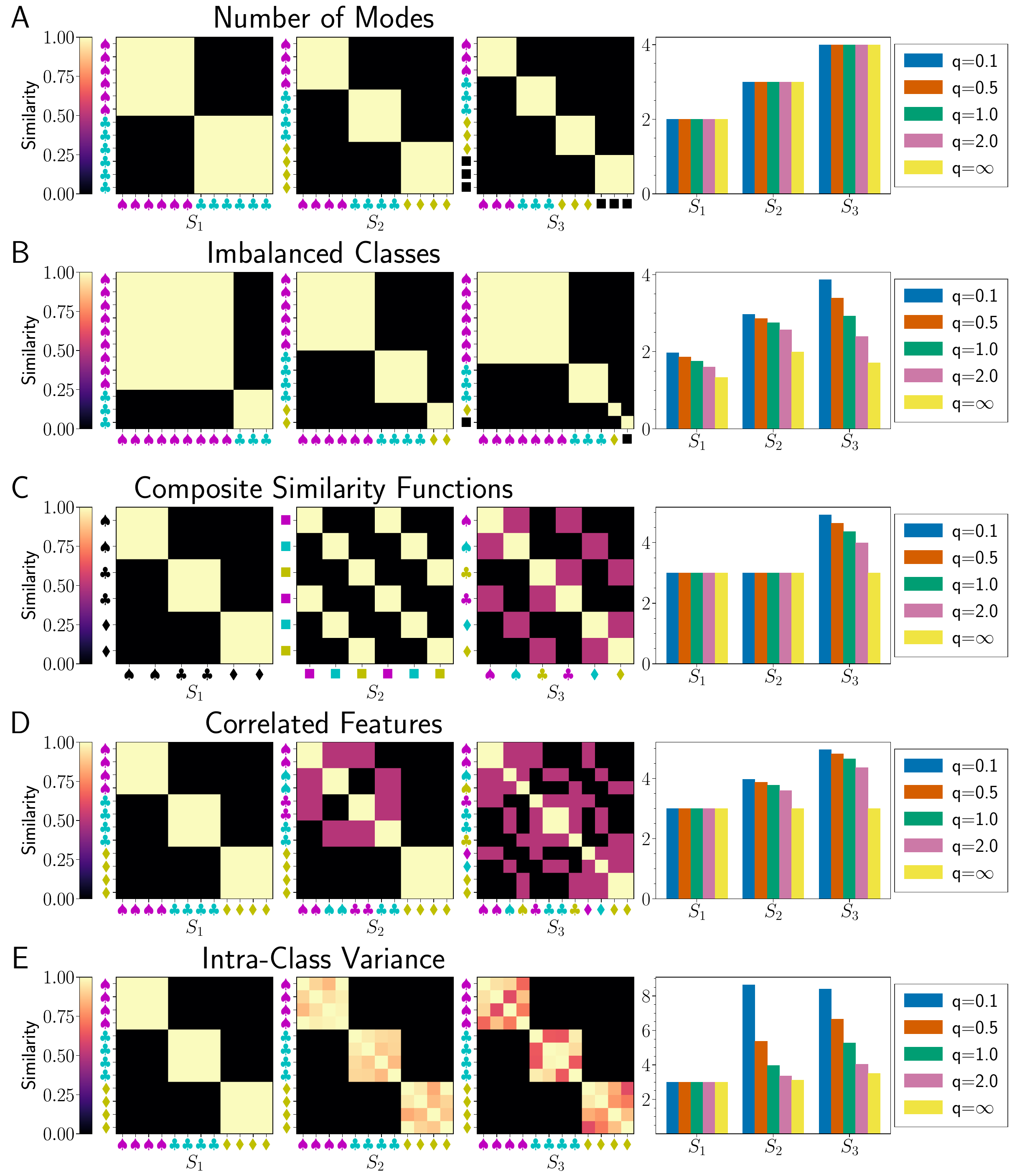}
    \caption{\textbf{Sensitivity of Different Vendi Scores Under Different Scenarios.} (A) Varying the number of classes under perfect balance. Each Vendi score measures the number of classes exactly; they are effective numbers. (B) Varying the number of classes under imbalance. Smaller orders $q$ more accurately describe the correct number of modes. (C) Combining two similarity functions for shape and color. All choices of order $q$ except $q=\infty$ give increases in diversity with the similarity composition. (D) Varying the correlation of shape and color features. As the correlation between shape and color decreases from left to right, all $q$ except $q=\infty$ yield larger Vendi scores. (E) Decreasing the similarity between class members. $q=\infty$ gives a Vendi Score that is more resistant to intra-class variance. For smaller $q$s, the Vendi scores increase with larger amounts of variance, although the Vendi score with $q=0.1$ decreases slightly between example $S_2$ and $S_3$. 
    }
    \label{fig:shape-color}
\end{figure*}

Evaluating diversity is a critical problem in many areas of machine learning (ML) and the natural sciences. Having a reliable diversity metric is necessary for evaluating generative models, curating datasets, and analyzing phenomena from the scale of molecules to evolutionary patterns. 

Ecologists have long studied the role of diversity in various ecosystems~\citep{whittaker1972evolution, hill1973diversity}, devising interpretable metrics that capture intuitive notions of diversity. However, these metrics tend to be limited in that they assume the ability to partition elements of an ecosystem into \emph{classes} or \emph{species} whose prevalence is known a priori. These metrics are also limited because they don't account for species similarity. Many have recently argued for the importance of accounting for species similarity to reliably measure diversity~\citep{leinster2012measuring}. We further argue that a diversity metric that accounts for similarity can be \emph{unsupervised}, i.e. such a metric doesn't need to assume the partitioning of elements of an ecosystem into known classes, nor does it need to assume knowledge of class prevalence. 

The Vendi Score was recently proposed as a generic unsupervised interpretable diversity metric that accounts for similarity by leveraging ideas from ecology and quantum mechanics~\citep{friedman2022vendi}. It's been shown useful for measuring the diversity of datasets and generative models~\citep{friedman2022vendi, stein2023exposing, diamantis2023intestine}, balancing the modes of image generative models~\citep{berns2023towards}, and accelerating molecular simulations~\citep{pasarkar2023vendi}. However, the Vendi Score accounts for different elements in a given collection according to their prevalence in the collection. This is undesirable in settings where there are large variations in item prevalence, such is the case for many \gls{ML} settings. We illustrate this failure mode in Figure \ref{fig:shape-color}, where the Vendi Score ($q=1$), under class imbalance, fails to separately account for the very rare classes (the black square and the yellow diamond) and lumps them into one class. 

\textbf{Contributions And Main Findings.} In this paper, we make several contributions and findings that we summarize below.
\begin{itemize}
    \item We extend the Vendi Score to a family of diversity metrics, \emph{Vendi scores} with different levels of sensitivity to item prevalence. The sensitivity is determined by a positive real number $q$, the \emph{order} of the score. The Vendi scores are based on the Hill numbers in ecology but, unlike Hill numbers, they account for similarity and are unsupervised.
    \item We showcase the usefulness of the Vendi scores in accelerating the simulation of Alanine Dipeptide, a well-studied benchmark molecular system. We find that the choice of $q$ can prioritize dynamics along certain axes, which can improve mixing and convergence.
    \item We show how the scores can be used to better evaluate and understand the behavior of generative models. We study the Vendi scores jointly with several metrics used in \gls{ML} to evaluate memorization, diversity, coverage, and sample quality. Our results reveal that generative models with a high human error rate or low \gls{FD} and \gls{KD}---i.e. those generative models that tend to produce samples that human evaluators cannot distinguish from real data---are those that memorize training samples and create duplicates around the memorized training samples. This finding calls for the need to pair sample quality metrics with a metric that reliably measures duplication or memorization and a metric that measures diversity effectively. We recommend pairing sample quality metrics with a Vendi score of small order $(q \in [0.1, 0.5])$ for diversity and the Vendi score of infinite order for duplication and memorization. Indeed, the Vendi score with infinite order is the most sensitive to duplicates and is strongly correlated with $C_T$-modified, a metric used to measure memorization, whereas Vendi scores of small order are more sensitive to rarer items and can effectively reflect diversity. 
    \item We found the scores to be strongly correlated, positively or negatively, with many existing metrics used to measure memorization and coverage. Those metrics rely on training data. Our findings suggest the Vendi scores provide the ability to indirectly evaluate memorization and coverage without relying on training data. This capability becomes even more important in privacy settings and as training datasets become more and more closed-source. 
\end{itemize}

\section{RELATED WORK}
\label{sec:related}

Several diversity metrics have been proposed in \gls{ML} and ecology. 

\parhead{Diversity metrics in \gls{ML}.} ML researchers often use some form of average pairwise similarity to quantify diversity, e.g. pairwise-BLEU~\citep{shen2019mixture} and D-Lex-Sim~\citep{fomicheva2020unsupervised} for text data or IntDiv for molecular data~\citep{benhenda2017chemgan}. Average similarity computations have been scaled from squared complexity to linear complexity in the size of the collection to be evaluated for diversity, enabling the assessment of the diversity of very large chemical databases~\citep{miranda2021extended, chang2022improving, racz2022molecular}. However, as discussed in \cite{friedman2022vendi}, average similarity can fail to effectively capture diversity, even in simple scenarios, e.g. it can score two populations with the same number of components/species but different levels of per-component variance the same. 

Other metrics used to evaluate diversity, especially in computer vision, include recall~\citep{sajjadi2018assessing} and Fr\'echet Inception distance (FID)~\citep{heusel2017gans}. However, these metrics are less flexible as they rely on a reference distribution, and in the case of FID, additionally require the availability of a pre-trained network. 

Yet other ways of measuring or enforcing diversity have been considered in active learning and experimental design settings~\citep{nguyen2023nonmyopic, maus2022discovering}. For example \cite{nguyen2023nonmyopic} enforce diversity via a diminishing returns criterion for multiclass active search, penalizing multiple explorations of the same class through a concave utility function. This yields improved results in multiclass active search. However, the approach is domain-specific and targets diversity indirectly. Several other works have studied diversity in the framework of Bayesian optimization~\citep{maus2022discovering} or evolutionary algorithms~\citep{mouret2015illuminating, vassiliades2017using, pugh2016quality}. 

\parhead{Diversity metrics in Ecology.} Ecologists have long been interested in quantifying diversity and have developed several metrics to assess the diversity of ecological systems. Some of the most ubiquitously used metrics in ecology are arguably the Hill numbers~\citep{hill1973diversity} and the triplets alpha diversity, beta diversity, and gamma diversity~\citep{whittaker1972evolution}. 

Hill numbers have been shown to be the only family of diversity metrics that satisfy the axioms of diversity~\citep{leinster2012measuring}. Hill numbers, although interpretable and grounded in intuitive notions of diversity, have important shortcomings that limit their use: (1) they assume some concept of classes and an ability to classify samples within classes (2) they assume knowledge of an abundance vector $p$ quantifying the number of elements in each class and finally (3) they ignore similarity between elements. 

Gamma diversity measures the total diversity of an ecosystem spanning some space. \cite{whittaker1972evolution} intuited that such a diversity metric should account for both the \emph{local} diversity measured over individual sites spanning a narrower region of the space (or \emph{alpha diversity}) and the differentiation among the different sites (or \emph{beta diversity}). These diversity indices have the same limitations as the Hill numbers mentioned above. 

\parhead{The Vendi Score.} The Vendi Score aims to alleviate many of the challenges faced by the commonly employed metrics in \gls{ML} and ecology. It is interpretable, reference-free, and satisfies the same axioms of diversity as the Hill numbers~\citep{friedman2022vendi}. Furthermore, unlike the Hill numbers, the Vendi Score accounts for similarity and doesn't require knowledge of class prevalence. 

In a recent extensive evaluation study for image generative models, \cite{stein2023exposing} used the Vendi score of order $q=1$ and found it to work more effectively as a measure of per-class diversity. In this paper, we show that by using different orders $q$ of the Vendi score, we can gain useful insights into the global diversity of generative model outputs.

\section{HILL NUMBERS AND ECOLOGICAL DIVERSITY}
\label{sec:hill}

Consider a probability distribution $\rvp = (\ervp_1, \dots, \ervp_S)$ on a space $\gX = \left\{1, \dots, S\right\}$. Ecologists refer to each member of $\gX$ as a \emph{species} and to the individual probability $p_i$ as the \emph{relative abundance} of the $i^{\text{th}}$ species in $\gX$. Ecologists have proposed a number of axioms that a diversity metric should satisfy to match intuitions~\cite{leinster2012measuring}:
\begin{enumerate}
    \item \textbf{Effective number.} Diversity is defined as the effective number of species in a population, ranging between $1$ and $\vert\gX\vert$. A population containing $N$ equally abundant, completely dissimilar species should have a diversity score of $N$. If all species are identical, the diversity should be minimized and equal to $1$.
    \item \textbf{Partitioning.} Suppose a population is partitioned into subpopulations, with no species shared between subsets, and the species in each subset completely dissimilar from the species in any other subset. Then the diversity of $\gX$ should be entirely determined by the diversity and size of each subpopulation.
    \item \textbf{Identical species.} If two species are identical, then merging them into one should leave the diversity of the population unchanged. 
    \item \textbf{Monotonicity.} When the similarities between species are increased, diversity should decrease. 
    \item \textbf{Permutation symmetry.} Diversity should be unchanged by changing the order in which the species are listed. 
\end{enumerate}

Historically, most ecological diversity indices have not accounted for species similarity, making the assumption that all species are completely dissimilar and defining diversity only in terms of the relative abundance $\rvp$. In this setting, Chapter 7 of \cite{leinster2020entropy} shows that the only metrics satisfying the axioms described above are the \emph{Hill numbers}. The Hill number $\gD_q$ of order $q$ is the exponential of the Renyi entropy $\gH_q$ of order $q$,

\begin{align} 
    \gH_q(\rvp) &= \frac{1}{1 - q}\log\sum_{i \in \text{supp}(\rvp)} p_i^q \quad \text{and} \quad 
    \gD_q(\rvp) = \exp\left(\gH_q(\rvp)\right) \label{def:hill}
    .
\end{align}
 
Here $\text{supp}(\rvp)$ denotes the set of indices $i$ for which $p_i > 0$ and $q \geq 0$ determines the relative weight assigned to rare or common items. With $q = 0$, all species are given equal weight and $\gD_0(\rvp)$ is equal to the size of the support. This is an uninformative measure of diversity. More interesting diversity indices correspond to $q \ne 0$, with $q = 1$ and $q = \infty$ corresponding to special limit cases. Indeed,  the Hill number of order $q = 1$ is the exponential of the Shannon entropy of $\rvp$,
\begin{align}
    \gD_1(\rvp) &= \exp\left(-\sum_{i\in \text{supp}(\rvp)} p_i \log p_i\right)
\end{align}
and weighs each species in proportion to its prevalence. The other interesting Hill number is also a limit, the Hill number of infinite order, 
\begin{align}
    \gD_{\infty}(\rvp) &= \exp(-\log\max_i p_i) = \frac{1}{\max_i p_i}
\end{align}
which assigns all the weight to the most common species. For $q \not\in \left\{0, 1, \infty\right\}$, the behavior depends on whether $q$ is less than $1$ or greater than $1$. Values of $q$ smaller than $1$ assign higher weight to rare species whereas large values of $q$ assign higher weight to common species. 

Despite their popularity in ecology, Hill numbers have shortcomings that limit their use beyond ecology: they make the strong assumption that species prevalence is known and don't account for species similarity.

\section{COUSINS OF THE VENDI SCORE: EXTENDING HILL NUMBERS USING SIMILARITY}
\label{sec:cousins}

How can we lift the limitations of the Hill numbers mentioned above and extend their applicability? ~\cite{friedman2022vendi} provide a solution for $q = 1$, drawing ideas from quantum statistical mechanics. Indeed, the von Neumann entropy $\mathcal{H}(\rho)$ for a quantum system with density matrix $\rho$ is of the same form as the Hill number of order $1$, 
\begin{align}
    \mathcal{H}(\rho) &= -\text{tr}(\rho\log \rho) = -\sum_i \lambda_i\log\lambda_i 
\end{align}
where the $\lambda_i$s are the eigenvalues of $\rho$. Replacing the density matrix with a normalized similarity matrix of species yields the Vendi Score:
\begin{align}
    \text{VS}(\rvx, \mathbf{k}) &= \exp\left(-\text{tr}\left(\frac{K_{\rvx}}{N}\log \frac{K_{\rvx}} {N}\right)\right) = \exp\left(- \sum_i \lambda_i\log\lambda_i \right)
\end{align}

where $\mathbf{k}(\cdot, \cdot)$ is a user-defined similarity function that induces a similarity matrix $K_{\rvx}$ over the species and the $\lambda_i$s are the eigenvalues of $\frac{K_{\rvx}} {N}$.

In this paper, we provide a theorem that relates the eigenvalues of a normalized similarity matrix to item prevalence and use this result to extend the Vendi Score to the other Hill numbers. 

\begin{theorem}\label{thm}[The Similarity-Eigenvalue-Prevalence Theorem]\label{thm:sim-eigen-prev}
    Let $(\rvx_1, \dots, \rvx_N)$ denote a collection of elements, where each $\rvx_i = (\rvx_{i1}, \dots, \rvx_{iM_i})$ contains a unique element repeated $M_i$ times, i.e. $\rvx_{ij} = \rvx_{ik}$ for all $j,k \in \left\{1, \dots, M_i\right\}$. Define $C = \sum_{i=1}^{N} M_i$. Let $\rmK \in \mathbb{R}^{C\times C}$ denote a kernel matrix such that $\rmK(\rvx_{i\bullet}, \rvx_{j\bullet}) = 1$ when $i = j$ and $0$ otherwise, $\forall i,j \in \left\{1, \dots, N\right\}$. Denote by $\tilde{\rmK} = \frac{\rmK}{C}$ the normalized kernel. Then $\tilde{\rmK}$ has exactly $N$ non-zero eigenvalues $\lambda_1, \dots, \lambda_N$ and $\lambda_i = \frac{M_i}{C}$ \text{ } $\forall i \in \left\{1, \dots, N\right\}$. 
\end{theorem}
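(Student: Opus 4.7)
The plan is to observe that, after a permutation of rows and columns (which preserves eigenvalues), the kernel matrix $\rmK$ is block diagonal with one block per group of duplicates, and then to analyze the spectrum of each block directly.

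First I would note that, since $\rmK(\rvx_{ij}, \rvx_{k\ell}) = 1$ when $i = k$ and $0$ otherwise, grouping together the $M_i$ copies in the $i$-th block gives
\begin{equation*}
    \rmK \;=\; \mathrm{diag}\bigl(\rmJ_{M_1},\, \rmJ_{M_2},\, \dots,\, \rmJ_{M_N}\bigr),
\end{equation*}
where $\rmJ_{m} \in \mathbb{R}^{m\times m}$ is the all-ones matrix. The spectrum of a block diagonal matrix is the multiset union of the spectra of its blocks, so it suffices to understand each $\rmJ_{M_i}$.

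Next I would compute the spectrum of $\rmJ_m$: it is a rank-one matrix $\vone\vone^{\top}$, hence has a single nonzero eigenvalue equal to $m$ with eigenvector $\vone / \sqrt{m}$, together with $m-1$ zero eigenvalues corresponding to any basis of $\vone^{\perp}$. Applying this to each diagonal block yields exactly $N$ nonzero eigenvalues of $\rmK$, namely $M_1, \dots, M_N$, with corresponding orthonormal eigenvectors supported on disjoint index sets. The remaining $C - N$ eigenvalues are zero.

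Finally, since $\tilde{\rmK} = \rmK / C$ is just a positive scalar multiple of $\rmK$, its eigenvalues are obtained by dividing those of $\rmK$ by $C$, giving the claimed nonzero eigenvalues $\lambda_i = M_i / C$ for $i = 1, \dots, N$. I do not expect a real obstacle here: the argument is a routine block-diagonal decomposition combined with the rank-one spectrum of the all-ones matrix, and the only mild care needed is to note that permuting samples so that duplicates become contiguous leaves the eigenvalues of $\rmK$ (and hence of $\tilde{\rmK}$) unchanged.
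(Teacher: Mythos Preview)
Your proposal is correct and follows essentially the same approach as the paper: reduce to a block-diagonal matrix with one all-ones block per group of duplicates, observe each block is rank one with nonzero eigenvalue equal to its size, and then scale by $1/C$. The only cosmetic differences are that the paper divides by $C$ before analyzing the blocks and invokes the characteristic-polynomial factorization explicitly, whereas you work with $\rmK$ first and state the block-diagonal spectrum fact directly.
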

\begin{proof}
    A complete proof of this theorem can be found in the appendix. 
\end{proof}

The theorem states that under the assumption that members of different species are completely dissimilar---this is the same assumption made in the computation of the Hill numbers--- there are as many nonzero eigenvalues of the species similarity matrix as there are species and that these eigenvalues are exactly equal to the prevalence of the different species. This theorem therefore provides a recipe for recovering the different Hill numbers exactly using the similarity-based construct the Vendi Score is based on. 

The benefit of computing the Hill numbers using the same similarity-based approach as the Vendi Score is it hints at the possibility of not needing to assume knowledge of species prevalence. Furthermore, the assumption of complete dissimilarity between species is strong and limits the Hill numbers' applicability. The similarity-based approach described above readily allows us to lift that assumption, by simply replacing the zero entries in the similarity matrix with a user-defined similarity function between species. 

Endowed with Theorem \ref{thm}, we can safely transition from Hill numbers---diversity indices that assume knowledge of species prevalence and that don't account for species similarity---to \emph{Vendi scores}, diversity indices that don't assume knowledge of species prevalence and that effectively account for species similarity. We denote by $\text{VS}_q(\rvx, \rvk)$ the Vendi score of order $q$ for the collection $\rvx$ under similarity function $\rvk(\cdot, \cdot)$ and define,
\begin{align}
    \text{VS}_q(\rvx, \rvk) &= \exp\left(\frac{1}{1-q} \log \sum_{i \in \text{supp}(\lambda(\rvx, \rvk))} \lambda_i^q(\rvx, \rvk)\right)
\end{align}
Here $\lambda(\rvx, \rvk)$ denotes the set of eigenvalues of the normalized similarity matrix induced by the input similarity function $\rvk(\cdot, \cdot)$, and $\text{supp}(\lambda(\rvx, \rvk))$ denotes the indices for the nonzero eigenvalues. 

Figure \ref{fig:shape-color} shows the behavior of the Vendi scores under different scenarios. The figure considers collections composed of elements with different shapes and colors. The scores are computed using a similarity function that assigns $1$ to elements with the same shape and color, $0.5$ to elements that have either the same shape or the same color, and $0$ to completely distinct elements. In the figure, we see that the Vendi Score ($q=1$), under class imbalance, fails to detect the introduction of a rare class (the black square), leading to a score of $\approx 3$ despite the presence of $4$ classes. The Vendi scores with orders smaller than $1$ are more sensitive to those rare classes and accurately measure diversity under class imbalance. The figure also shows Vendi scores with smaller orders to be more reliable under the presence of small variations within different classes. 

The Vendi scores have several desiderata beyond the ones we mentioned earlier. Indeed, they enjoy the same axioms as the Hill numbers and are therefore interpretable diversity scores that can be used to study diversity, e.g. in ecological systems. Some of these features include:
\begin{enumerate}
    \item The Vendi scores are monotonically decreasing as a function of the order $q$,
    \begin{align}
    \text{VS}_\infty(\rvx, \rvk) &\leq \dots \leq \text{VS}_1(\rvx, \rvk) \leq \text{VS}_0(\rvx, \rvk).
    \end{align}
    \item The Vendi score of order $2$ provides bounds on the Vendi score of order $\infty$: 
    \begin{align}
    \sqrt{\text{VS}_2(\rvx, \rvk)} &\leq \text{VS}_\infty(\rvx, \rvk) \leq \text{VS}_2(\rvx, \rvk). 
    \end{align}
\end{enumerate}

More importantly, the Vendi scores are differentiable, which makes them amenable to gradient-based methods in machine learning and science. This differentiability enables us to go beyond simply evaluating diversity to effectively enforcing diversity, by embedding the scores into objective functions of interest.

\textbf{Enforcing diversity.} Enforcing diversity has benefits in molecular simulations as shown by \citet{pasarkar2023vendi}, but also in many areas of machine learning, e.g. active learning and experimental design~\citep{maus2022discovering, nguyen2023nonmyopic}, generative modeling~\citep{dieng2019prescribed}, and reinforcement learning~\citep{eysenbach2018diversity}. 

Enforcing diversity with average similarity may be ineffective as average similarity fails to capture heterogeneity in data, e.g. variances between members of the same species. We refer the reader to \cite{friedman2022vendi} for examples illustrating the limitations of average similarity as a diversity metric. 

Enforcing diversity with other existing diversity metrics such as FID, KID, recall, and coverage is currently computationally impossible. Indeed, these metrics are either non-differentiable or may be challenging to optimize as they require querying large pre-trained networks at each optimization iteration. 

 The Vendi scores are differentiable interpretable diversity indices that can be used to effectively enforce diversity. In Section \ref{sec:empirical} we use the scores within Vendi Sampling to enforce diversity and study their effectiveness for accelerating molecular simulations. 

\begin{figure*}[t]
    \centering
    \begin{subfigure}[b]{0.375\textwidth}
        \centering
        \includegraphics[width=\linewidth]{./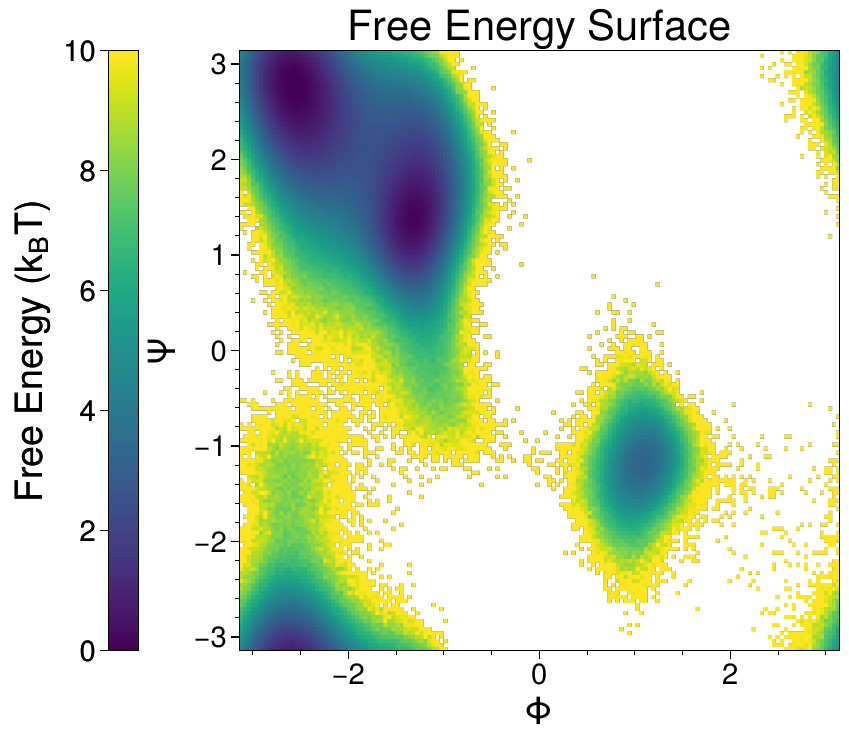}
\end{subfigure}
    \begin{subfigure}[b]{0.296\textwidth}
        \centering
        \includegraphics[width=\linewidth]{./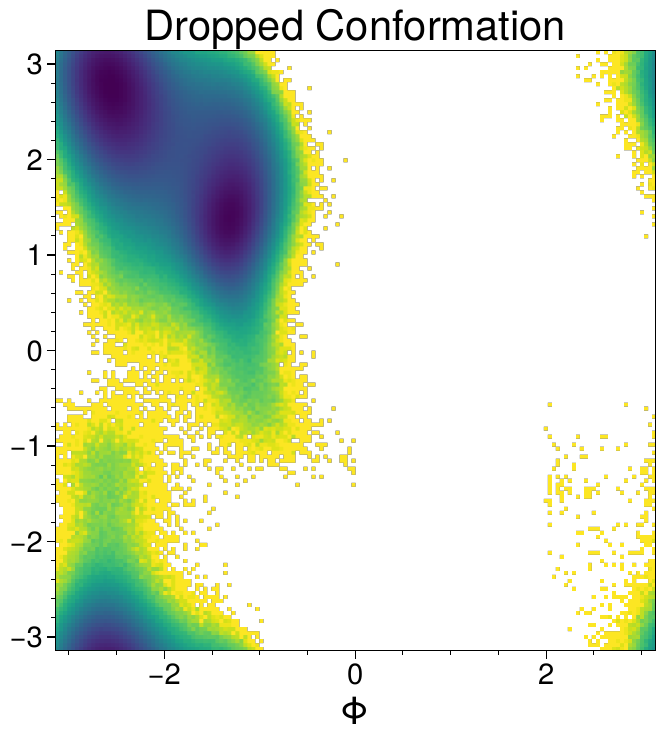}
\end{subfigure}
    \begin{subfigure}[b]{0.308\textwidth}
        \centering
        \includegraphics[width=\linewidth]{./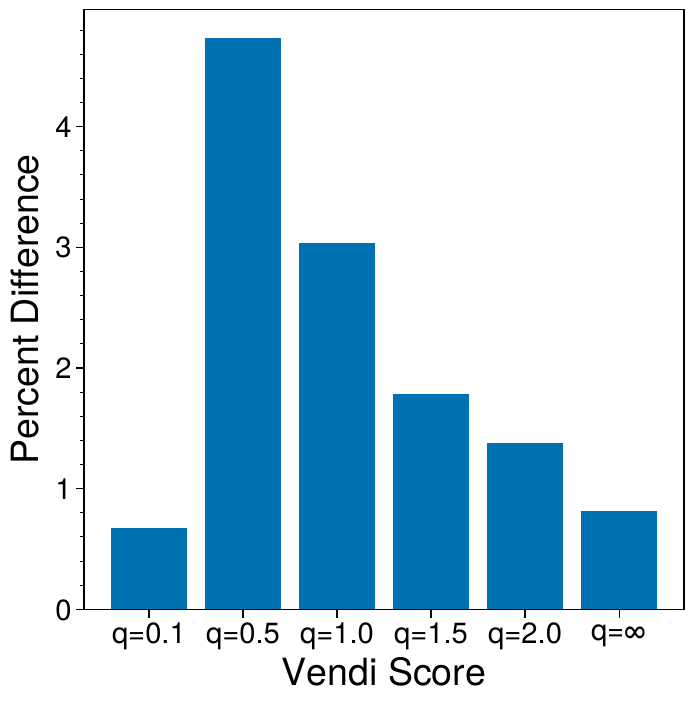}
\end{subfigure}
     \caption{\textbf{Sensitivity of different Vendi scores to missing Alanine Dipeptide conformation}. Left: Ramachandran plot from an unbiased simulation of Alanine Dipeptide plotted against the two dihedral angles $\phi,\psi$. Center: Ramachandran plot after removing the left-handed conformation. Right: The percent difference for different Vendi scores between samples from the original simulation and samples missing the left-handed state. Vendi scores are calculated using $20,000$ molecules from each set of samples using an invariant RBF Kernel with $\gamma=1$.}
    \label{fig:alamode}
\end{figure*}

\begin{figure*}[t]
    \centering
    \begin{subfigure}[b]{0.442\textwidth}
        \centering
        \includegraphics[width=\linewidth]{./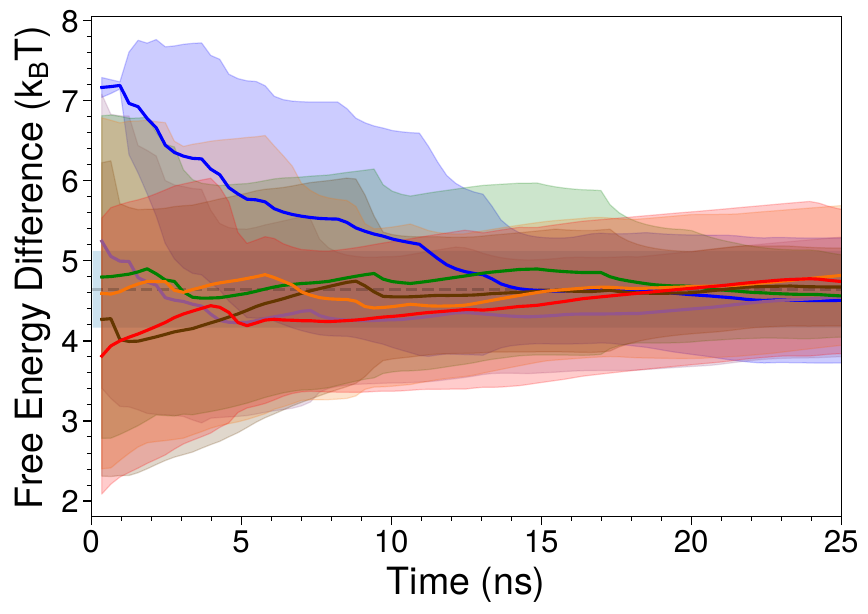}
\end{subfigure}
    \begin{subfigure}[b]{0.548\textwidth}
        \centering
        \includegraphics[width=\linewidth]{./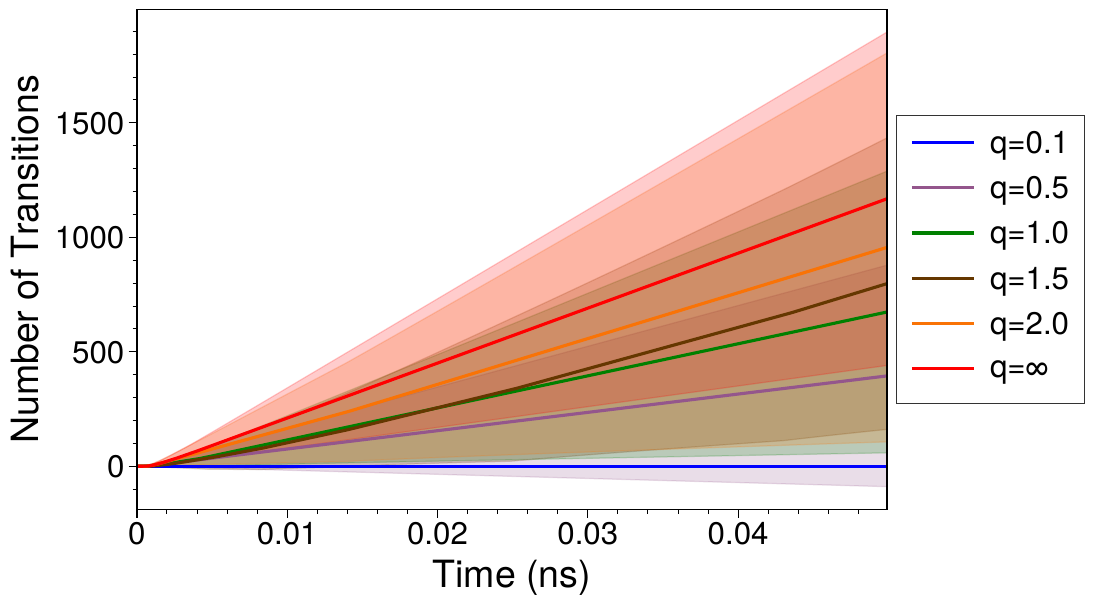}
\end{subfigure}
     \caption{\textbf{Behavior of the Vendi scores for sampling Alanine Dipeptide}. Left: Convergence of Vendi sampling under different scores over $25$ns of simulation to the free energy difference estimated from long unbiased simulations (dashed gray line). Right: Number of transitions for each score in and out of the left-handed state over the course of the first $50$ps of simulation. Shaded regions represent uncertainty over $10$ trials. 
    }
    \label{fig:alasamp}
\end{figure*}

\textbf{Computation.} Computing the Vendi scores requires finding the eigenvalues of an $N \times N$ normalized similarity matrix. This has complexity $\gO(N^3)$ which is computationally costly for large collections. Fortunately, Rayleigh-Risz provides a way to reduce computational cost. Consider a collection of size $N$ and denote by $\tilde{K}$ its normalized similarity matrix. Let $V \in \mathbb{R}^{N \times m}$ be an orthogonal matrix, with $m<<N$. We can compute the eigenvalues of $\tilde{K}$ by computing the eigenvalues of $V^*\tilde{K}V$, which is an $m \times m$ matrix. 

There are different ways to choose the orthogonal matrix $V$, each leading to a different scaling strategy. For very large collections, choosing $V$ to be a binary orthogonal matrix is equivalent to subsampling $m$ elements of the collection and approximating the Vendi scores using that subset. This would have $\gO(m^3)$ complexity, which is efficient for $m<<N$, and would allow to trade-off accuracy with computational cost since $m$ is determined by the user. When embeddings are readily available for the elements in the collection, e.g. Inceptionv3 or DINOv2 embeddings for images, we can perform a Gram-Schmidt orthogonalization of the embedding matrix of the elements of the collection to define $V$. We would then use V as described above to compute the Vendi scores. This would have complexity $\gO(N^2m)$---the same complexity as the computation of metrics such as FID---and has the benefit to extend the covariance trick mentioned in ~\cite{friedman2022vendi} to similarity functions beyond cosine similarity.

 \section{EMPIRICAL STUDY}
\label{sec:empirical}

\begin{figure*}[t]
    \centering
    \begin{subfigure}[b]{0.42\textwidth}
        \centering
        \includegraphics[width=\linewidth]{./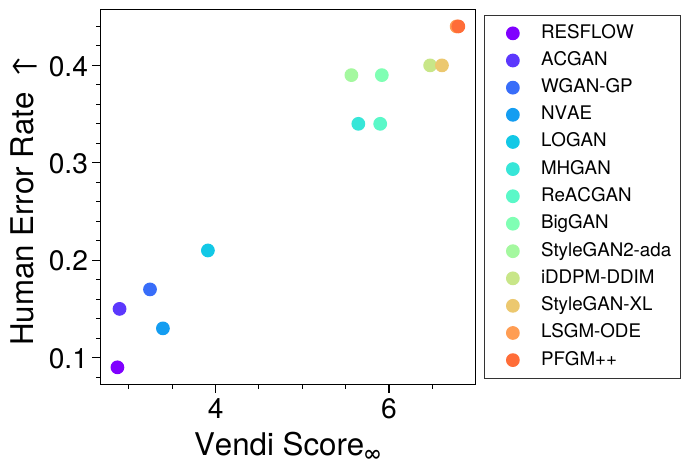}
\end{subfigure}
    \begin{subfigure}[b]{0.42\textwidth}
        \centering
        \includegraphics[width=\linewidth]{./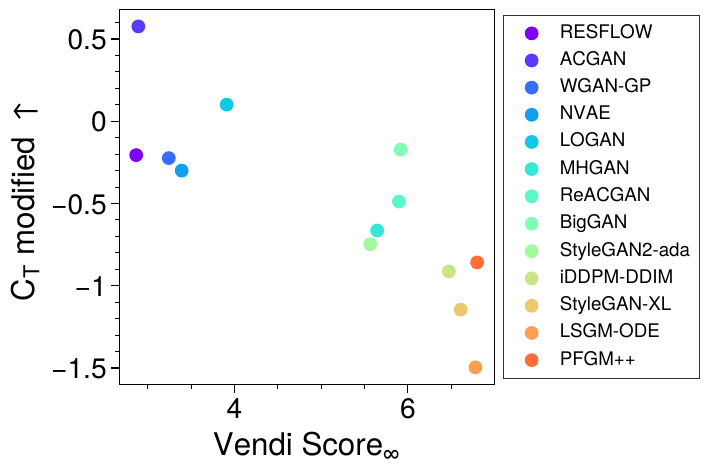}
\end{subfigure}\par
    \begin{subfigure}[b]{0.42\textwidth}
        \centering
        \includegraphics[width=\linewidth]{./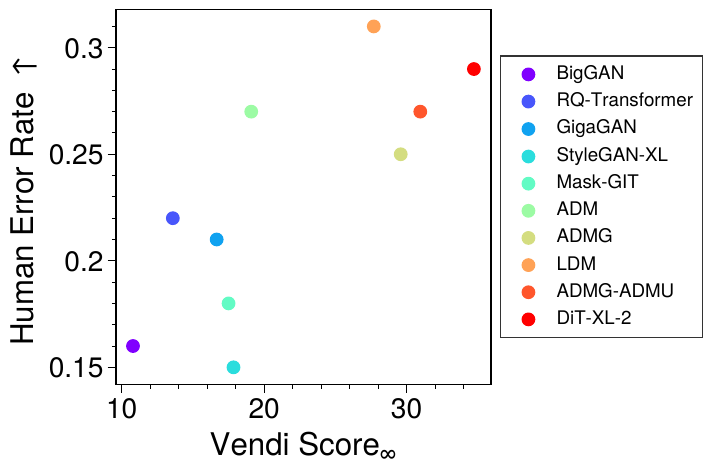}
\end{subfigure}
    \begin{subfigure}[b]{0.42\textwidth}
        \centering
        \includegraphics[width=\linewidth]{./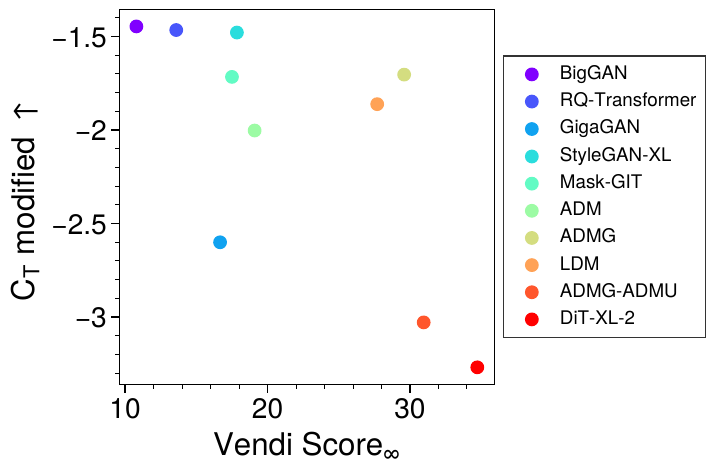}
\end{subfigure}
    \caption{\textbf{Vendi scores correlate strongly with human evaluation and memorization scores on CIFAR-10 and Imagenet256.} Left: Human classification error rate vs. Vendi Score$_\infty$ for models trained on CIFAR-10 (Top) and Imagenet256 (Bottom). Right: $C_T$-modified vs. Vendi Score$_\infty$ for models trained on CIFAR-10 (Top) and Imagenet256 (Bottom).}
    \label{fig:her_vs}
\end{figure*}

\subsection{Application To Vendi Sampling}
Molecular simulations through Langevin Dynamics are often plagued by slow mixing times between metastable states. A recent alternative approach, Vendi Sampling, was developed to improve the speed at which these simulations can be performed ~\citep{pasarkar2023vendi}. In Vendi Sampling, a collection of molecular replicas are evolved over time using Langevin Dynamics, with an additional diversity penalty term, called the \emph{Vendi force}, given by the gradient of the logarithm of the Vendi score. We aim to study how the choice of order $q$ affects the behavior of the Vendi force and the convergence of Vendi Sampling. We analyze convergence by looking at free energy differences. 

In order to provide an unbiased estimate of this quantity, we switch the coefficient of the Vendi force to $0$ after a specified number of steps. We only analyze samples taken when this coefficient is $0$. 

We can measure the relative probabilities of each state by performing a long unbiased simulation. We perform simulations in OpenMM (v. 8.0) \citep{eastman2017openmm}, following the experimental setup of ~\citet{pasarkar2023vendi}. 

To calculate the Vendi scores, we use a Gaussian Radial Basis Function (RBF) kernel $k(x,x')=\exp\left(-\gamma \|\mathbf{x} - \mathbf{x}'\|^2\right)$ where $\gamma$ is a hyperparameter of choice. We also require that the kernel be invariant to various rigid-body transformations, including translations and rotations. We follow the method outlined in \citet{jaini2021learning} for computing invariant coordinates. The invariant coordinates are passed into the RBF kernel, from which we can compute the Vendi scores. Further experimental details regarding how simulations are performed are available in the appendix. 

We first look to see how sensitive each Vendi score is in detecting Alanine Dipeptide conformations. In this molecule, the conformations are largely defined by its two dihedral angles $\phi$ (C-N-C$\alpha$-C) and $\psi$ (N-C$\alpha$-C-N) along the backbone. We focus on the left-handed state (defined by $0<\phi<2$), which constitutes $\approx 1\%$ of all samples in the reference simulations. We compare Vendi scores from samples from all conformation to samples that are not in the left-handed state. We find that for extreme values of $q$, the score is relatively unaffected, whereas for $q=0.5$ and $q=1$, there is a significant change in the Vendi score (Fig \ref{fig:alamode}). In Figure \ref{fig:shape-color}, $q=0.1$ can detect imbalanced classes, but it cannot detect when the intra-class variance changes between non-zero values. This result, combined with Fig \ref{fig:alamode}, suggests that the small values of $q$ can only detect rare classes when there is not a large amount of intra-class diversity, as there would be for Alanine Dipeptide conformations. Figure \ref{fig:shape-color} also demonstrates that Vendi Score$_\infty$ only detects the presence of large classes, so it is not surprising that it is insensitive to missing the small left-handed state. 

We further test the behavior of these scores in Vendi Sampling for Alanine Dipeptide (Fig \ref{fig:alasamp}). We evaluate convergence using the boundary of $\phi=0$. Hyperparameters are tuned for each choice of $q$ via grid search. We find that for most choices of order $q$, the sampling method converges within $0.4 k_BT$ of the estimated free energy difference within the first $5$ns of simulation, while the Vendi score with $q=0.1$ is slower to converge. Interestingly, unlike in the Double Well system (see Appendix), we find that $q=\infty$ is able to increase mixing rapidly in the initial stages of the simulation. With this choice of $q$, only the largest eigenvalue of the replica's kernel matrix is optimized, suggesting that the associated eigenvector is aligned with a useful biasing potential at various steps in the simulation. This highlights the importance of using large $q$ for regularization.

\subsection{Application To Generative Models}
\label{sec:imagegen}

\begin{figure}[t]
    \includegraphics[width=\linewidth]{./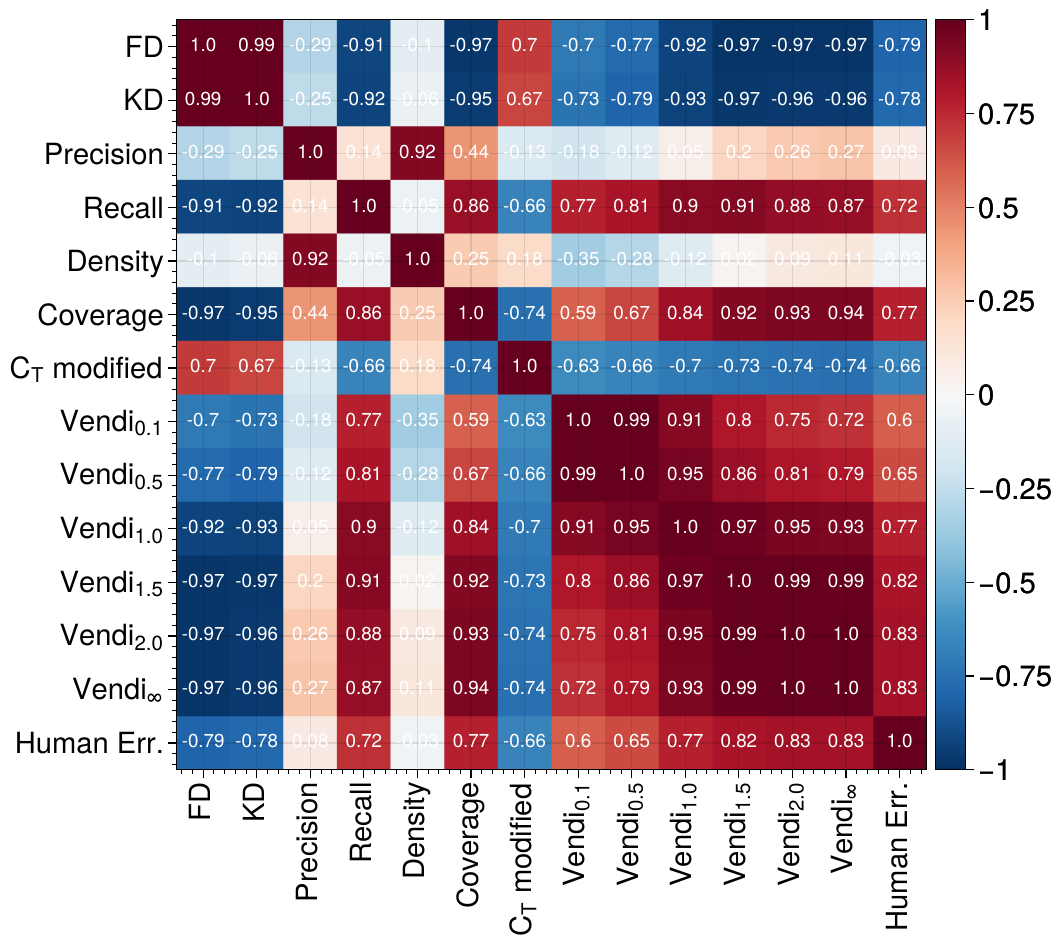}
    \caption{\textbf{Pearson correlations between metrics averaged across four training datasets}. $C_T$-modified is computed only on CIFAR-10 and Imagenet256. Vendi scores of large order $q$ correlate strongly with various metrics for evaluating generative models.}
    \label{fig:image_metrics}
\end{figure}

We analyze $40$ of the generative models presented in \citet{stein2023exposing}, which spans multiple classes of models and training datasets. In particular, we look at models trained on CIFAR-10 ~\citep{krizhevsky2009learning}, Imagenet256 ~\citep{deng2009imagenet}, LSUN-Bedroom ~\citep{yu2015lsun}, and FFHQ ~\citep{karras2019style}. Further description of the models and metrics is available in the appendix.  

\citet{stein2023exposing} find that the DINOv2 ViT-L/14 network ~\citep{oquab2023dinov2} produces a representation space for which various evaluation metrics align well with human evaluation. We thus use the same network to produce embeddings of the generated outputs from each model. Vendi Scores are computed on these embeddings using a linear kernel.

In Fig. \ref{fig:her_vs}, we see that Vendi Score$_{\infty}$ correlates quite well with a model's ability to produce high-quality images (its Human Error Rate), and $C_T$-modified, a memorization metric presented in \citet{stein2023exposing} that is a modified version of the original $C_T$ metric proposed by \citet{meehan2020non}. $C_T$-modified measures how often a generated data point is closer to the training data than the test data, penalizing models that are closer to the training data. Vendi Score$_{\infty}$ is most sensitive to large groups of similar samples, likely duplicates. The strong negative correlation between Vendi Score$_{\infty}$ and $C_T$-modified therefore suggests that the models with the highest Vendi Score$_{\infty}$ are producing large groups of samples around the training data. This also explains why the models with the highest Vendi Score$_{\infty}$ have high Human Error Rates as well: the samples being produced by the models contain many duplicates that are highly similar to the training data, which will make much of the generated output difficult to classify as fake.

Figure \ref{fig:image_metrics} provides a comprehensive overview of all tested metrics and their correlations. Notable in the figure is the strong negative correlation between Vendi Score$_{\infty}$ and the metrics used to measure sample quality, namely Fr\'echet Distance (FD) ~\citep{heusel2017gans, stein2023exposing}, and Kernel Distance (KD) ~\citep{binkowski2018gans, stein2023exposing}. Models with low FD and KD have high sample quality by definition. Due to the strong negative correlation between Vendi Score$_{\infty}$ and FD and KD, models with good sample quality as measured by FD and KD tend to produce duplicates, since they have high Vendi Score$_{\infty}$. This is the same conclusion we drew earlier, looking at human error rate and Vendi Score$_{\infty}$. We also take note of the correlation between the Vendi scores and coverage, another metric used to measure diversity ~\citep{naeem2020reliable}. Coverage measures how many training data points are 'close' to any generated data point. Models with a high Vendi Score$_{\infty}$ are producing images centered around the training data, which would satisfy the coverage requirement for those training samples.

\section{DISCUSSION}
\label{sec:discussion}

In this paper, we extended the Vendi Score~\citep{friedman2022vendi} to exhibit different levels of sensitivity to rare or common items in a collection. This led to a family of metrics, called \emph{Vendi scores}, indexed by an order $q \geq 0$. 
We observed that Vendi scores with small values of $q$ prioritize rarer elements, whereas those with high order $q$ emphasize more common items. 

\parhead{Choice of the order $q$.} The ideal choice of $q$ for a given setting depends on the phenomena under study. For example, in Figure \ref{fig:alamode}, we aimed to detect the presence/absence of a rare class when other larger classes with significant intra-class variance exist. In this case, a good value of $q$ is not as sensitive to the variance within a class but can also detect rare classes. This means $q$ cannot be too low, so the score can be somewhat insensitive to the intra-class variance, or too high, so the score can be somewhat sensitive to rare items. Using the orders $q=0.5$ or $q=1$ best balances these behaviors. It is worth noting that the choice of the kernel can influence this trade-off, as it will determine the amount of intra-class variance in the kernel matrix. 

In Vendi Sampling, the order $q$ must facilitate transitions over high energy barriers typical in molecular simulations. For example, in Alanine Dipeptide, the left-handed state is separated from the other states by a large energy barrier. The Vendi score with infinite order, VS$_\infty$, yielded the most transitions across this barrier. Since VS$_\infty$ only relies on the largest eigenvalue, it provides a bias potential along the axis corresponding to the associated eigenvector along which all transitions are occurring. However, in a simulation in which there are multiple transitions of interest, the eigenvector associated with the largest eigenvalue is likely insufficient, making smaller values of $q$ needed. 

When evaluating whether there are duplicates in the outputs of generative models, we want to use a Vendi score that is sensitive to duplication. Our results demonstrate that VS$_\infty$ is a good candidate for this task.

\parhead{Limitations.} This paper addressed the limitation of the Vendi Score under imbalanced settings. A pending problem is the choice of the kernel, which also affects the behavior of the Vendi scores. In future work, we aim to understand how the choice of kernel interfaces with the order $q$. \filbreak The Vendi scores can also be computationally costly to compute when faced with large collections of data that do not have vector representations. Finding methods for scaling the scores when no embeddings are available remains an open problem.

\section{CONCLUSION}

We extended the Vendi Score to a family of diversity metrics that allocate different levels of sensitivity to rare or common items in a collection. These scores vary in their overall behavior, such as their sensitivity to imbalanced classes and inter-class variance. Our molecular simulations of Alanine Dipeptide revealed that using a score of order $q = \infty$ enables faster mixing, suggesting that the associated eigenvector is aligned with a useful bias potential. We also demonstrated the utility of using the Vendi scores in evaluating image generative models. Our experiments revealed that image generative models that tend to score well on sample quality metrics, e.g. human error rate or Fréchet Distance, are those models that produce duplicates around memorized training samples. This calls for the need to pair sample quality metrics with the Vendi scores, to better distinguish models that have high sample quality only because of memorization and duplication around memorized samples and models that do produce sharp samples without memorization.

\subsection*{Acknowledgements}
Adji Bousso Dieng acknowledges support from the National Science Foundation, Office of Advanced Cyberinfrastructure (OAC) \#2118201, and from the Schmidt Futures AI2050 Early Career Fellowship. Amey Pasarkar is supported by an NSF-GRFP fellowship.

\subsection*{Dedication} This paper is dedicated to \href{https://en.wikipedia.org/wiki/Aline_Sitoe_Diatta}{Aline Sitoe Diatta}.

\clearpage
\bibliographystyle{apa}
\bibliography{arxiv}

\section{Appendix}

\subsection{Proof of Theorem 4.1}

\begin{theorem}[The Similarity-Eigenvalue-Prevalence Theorem]\label{thm:sim-eigen-prev}
    Let $(\rvx_1, \dots, \rvx_N)$ denote a collection of elements, where each $\rvx_i = (\rvx_{i1}, \dots, \rvx_{iM_i})$ contains a unique element repeated $M_i$ times, i.e. $\rvx_{ij} = \rvx_{ik}$ for all $j,k \in \left\{1, \dots, M_i\right\}$. Define $C = \sum_{i=1}^{N} M_i$. Let $\rmK \in \mathbb{R}^{C\times C}$ denote a kernel matrix such that $\rmK(\rvx_{i\bullet}, \rvx_{j\bullet}) = 1$ when $i = j$ and $0$ otherwise, $\forall i,j \in \left\{1, \dots, N\right\}$. Denote by $\tilde{\rmK} = \frac{\rmK}{C}$ the normalized kernel. Then $\tilde{\rmK}$ has exactly $N$ non-zero eigenvalues $\lambda_1, \dots, \lambda_N$ and $\lambda_i = \frac{M_i}{C}$ \text{ } $\forall i \in \left\{1, \dots, N\right\}$. 
\end{theorem}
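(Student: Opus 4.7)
The plan is to exploit the block structure of $\rmK$ induced by the grouping of identical elements. Since the kernel is $1$ between two indices belonging to the same group and $0$ between indices from different groups, there is a permutation of rows and columns (grouping all copies of $\rvx_i$ together) that turns $\rmK$ into a block-diagonal matrix
\begin{equation*}
    P\rmK P^\top \;=\; \bigoplus_{i=1}^N J_{M_i},
\end{equation*}
where $J_{M_i} \in \Reals^{M_i\times M_i}$ is the all-ones matrix. Because similarity transformations preserve the spectrum and the spectrum of a block-diagonal matrix is the union (with multiplicity) of the spectra of its blocks, it suffices to analyze a single all-ones block.

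For each block I would observe that $J_m = \vone_m \vone_m^\top$ is a rank-one outer product, which gives two immediate consequences: (i) its rank is $1$, so it has exactly one nonzero eigenvalue and $m-1$ zero eigenvalues; and (ii) the nonzero eigenvalue is $\vone_m^\top \vone_m = m$, since $J_m \vone_m = m\vone_m$ shows $\vone_m$ is an eigenvector with eigenvalue $m$. Applying this blockwise, $\rmK$ has exactly $N$ nonzero eigenvalues, namely $M_1,\dots,M_N$, and the remaining $C-N$ eigenvalues are zero.

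Finally, I would invoke the scalar-multiplication rule for eigenvalues: if $\rmK v = \mu v$, then $\tilde{\rmK} v = (\mu/C) v$. Hence the nonzero eigenvalues of $\tilde{\rmK} = \rmK/C$ are exactly $\lambda_i = M_i/C$ for $i = 1,\dots,N$, which is the claim.

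There is no real obstacle here: the argument is essentially linear algebra applied to a disjoint union of rank-one blocks. The only thing one must be careful about is bookkeeping the multiplicities (making sure no nonzero eigenvalues are being double-counted or missed when combining blocks) and stating explicitly that permutation conjugation preserves the spectrum, so that relabeling the elements of the collection is without loss of generality.
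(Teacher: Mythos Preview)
Your proposal is correct and follows essentially the same approach as the paper: reduce (via permutation, which the paper leaves implicit as ``without loss of generality'') to a block-diagonal matrix of all-ones blocks, use that the spectrum of a block-diagonal matrix is the union of the block spectra, and then compute the spectrum of each rank-one block $J_{M_i}=\vone\vone^\top$ directly. The only cosmetic difference is that the paper normalizes by $C$ first and uses the characteristic-polynomial factorization $\det(\tilde{\rmK}-\gamma I)=\prod_i\det(J_i-\gamma I)$ to justify the block-spectrum step, whereas you work with $\rmK$ and scale at the end.
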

\begin{proof}
     Without loss of generality we construct $\tilde{\rmK}$ as a block diagonal matrix with $N$ blocks, where each block corresponds to a matrix indexed by the elements of $\rvx_i$. Denote by $\rmJ_i$ the $i^{\text{th}}$ block.
     On the one hand, we have 
     \begin{align*}
         \text{det}(\tilde{\rmK}) &= \prod_{i=1}^{N} \text{det}(\rmJ_i) \quad \text{and} \quad \text{det}(\tilde{\rmK} - \gamma \rmI_C) = \prod_{i=1}^{N} \text{det}(\rmJ_i - \gamma \rmI_{M_i})
     \end{align*}
     for any $\gamma$. Therefore the eigenvalues of $\tilde{\rmK}$ are exactly the collection of the eigenvalues of $\rmJ_1, \dots, \rmJ_N$. On the other hand, each $\rmJ_i$ is of size $M_i \times M_i$ and we have $\rmJ_i = \frac{1}{C}\left(1 \dots 1\right)^T\left(1\dots 1\right)$. Therefore rank($\rmJ_i$) $= 1$ and the null space of $\rmJ_i$ is of dimension $M_i-1$. This means $\rmJ_i$ has $M_i-1$ zero eigenvalues. Denote by $\lambda_i$ the remaining eigenvalue and by $\rvv_{i}$ its associated eigenvector. We have 
     \begin{align*}
         \rmJ_i (\rvv_{i1} \dots \rvv_{iM_i})^T &= \lambda_i (\rvv_{i1} \dots \rvv_{iM_i})^T\\
         \frac{1}{C}\left(1 \dots 1\right)^T\left(1\dots 1\right)(\rvv_{i1} \dots \rvv_{iM_i})^T &= \lambda_i (\rvv_{i1} \dots \rvv_{iM_i})^T\\
         \frac{1}{C}\left(1 \dots 1\right)^T\left(\rvv_{i1}+ \dots +\rvv_{iM_i}\right) &= \lambda_i (\rvv_{i1} \dots \rvv_{iM_i})^T
     \end{align*}
     Then $\rvv_i = \left(1 \dots 1\right)$ and $\lambda_i = \frac{M_i}{C}$. Since the eigenvalues of $\tilde{\rmK}$ correspond to the eigenvalues of $\rmJ_1, \dots, \rmJ_N$, we conclude $\tilde{\rmK}$ has exactly $N$ nonzero eigenvalues $\lambda_1, \dots, \lambda_N$ and $\lambda_i= \frac{M_i}{C}$ $\forall i$. 
\end{proof}
\subsection{Vendi Sampling: Alanine Dipeptide Experimental Details}
We compare against unbiased Alanine Dipeptide Langevin Dynamics simulations. These simulations used a time step of $2.0\,\mathrm{fs}$ and a collision frequency of  $1.0\mathrm{ps}^{-1}$. We establish baselines by running 10 simulations of $100\,\mathrm{ns}$ with $32$ replicas. 

Vendi Sampling requires the computation of the Vendi Score at each simulation time-step. In our molecular studies, we compute the Vendi Score using the translation- and rotation-invariant kernel defined in \cite{jaini2021learning}. This kernel takes as input the $3$D coordinates of the molecular replicas, produces a set of invariant coordinates, and then applies a Radial Basis Function (RBF) kernel on the coordinates. Invariant coordinates are computed by first centering each molecule at the origin (achieving translation invariance) and then aligning all molecules along a common frame (achieving rotational invariance). We compute the Vendi Score on the resulting similarity matrix from the invariant kernel. 

We analyze convergence by measuring the dihedral angle $\phi$ of each Alanine Dipeptide sample. After collecting all angles $\phi$ in a given simulation, we compute the free energy difference between samples with $\phi<0$ and $\phi>0$. Following \cite{pasarkar2023vendi}, this is calculated with 
\begin{align}
    F = -\log \frac{P(\phi>0)}{\log P(\phi<=0)}
\end{align}
This difference is computed for each of the $10$ baseline trials to estimate the true free energy difference across this boundary.  We also use this boundary to compute the number of times that a replica transitions in and out of the left-handed states. 

\begin{figure*}[t]
    \centering
    \begin{subfigure}[b]{0.31\textwidth}
        \centering
        \includegraphics[width=\linewidth]{./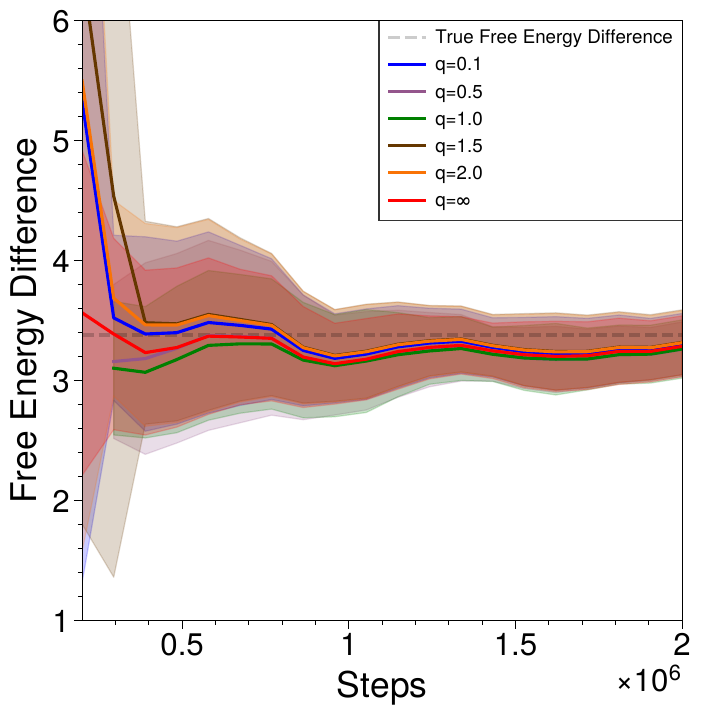}
\end{subfigure}
    \begin{subfigure}[b]{0.333\textwidth}
        \centering
        \includegraphics[width=\linewidth]{./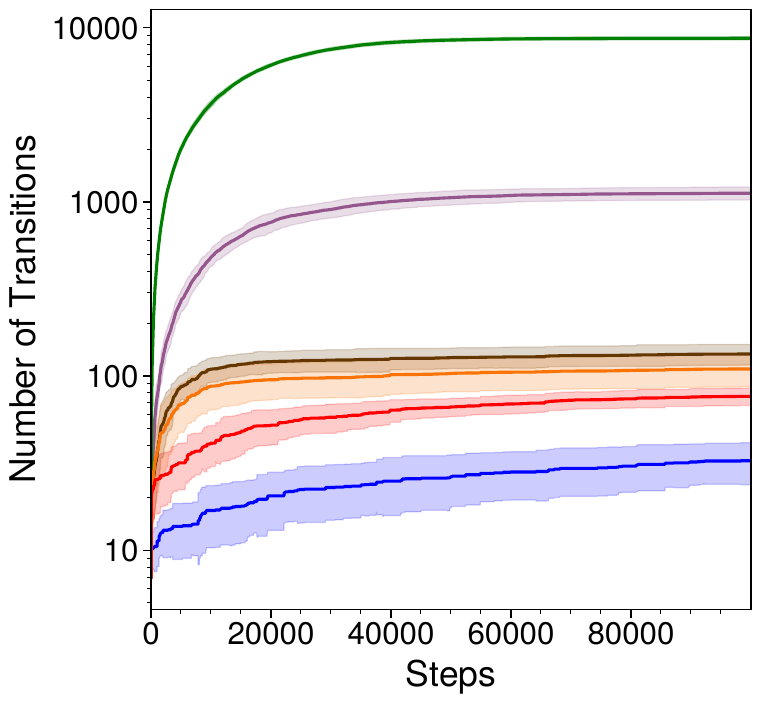}
\end{subfigure}
    \begin{subfigure}[b]{0.33\textwidth}
        \centering
        \includegraphics[width=\linewidth]{./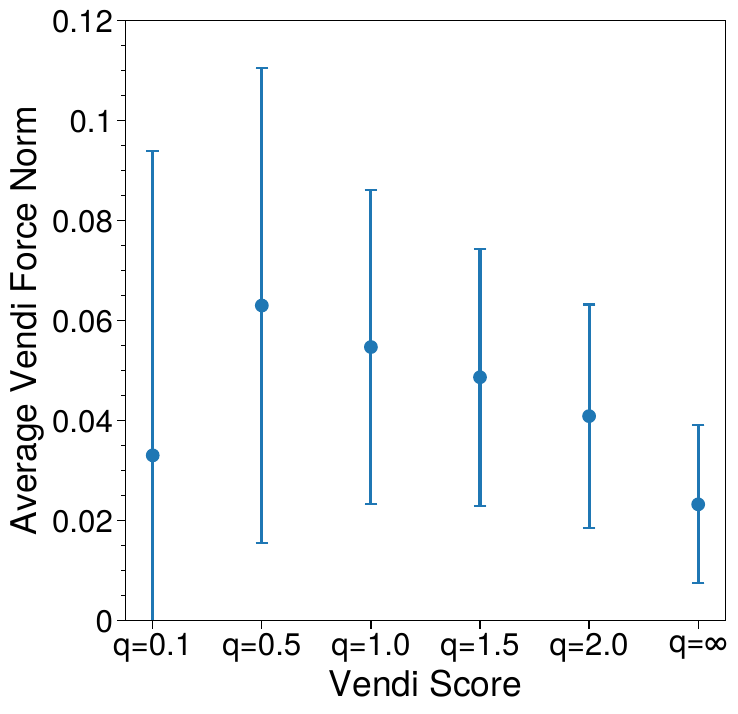}
\end{subfigure}
     \caption{\textbf{Performance of Different Vendi Scores in Sampling from Double Well} Left: Free energy difference over time for each choice of Vendi Score shows similar levels of convergence to the true free energy difference. Hyperparameters are tuned individually for each Vendi Score. Center: Number of transitions of replicas across boundary of $x=0$ over time for each choice of Vendi Score. Hill Numbers farther from $1$ seem to provide less transitions, likely due to smaller gradients, as shown in Right.}
    \label{fig:DW}
\end{figure*}

\subsection{Vendi Sampling: Double Well System}
\label{appendix:DoubleWell}
We additionally study the two-dimensional Double Well system from \cite{noe2019boltzmann}. The Double Well system is challenging for Langevin dynamics due to a large energy barrier that separates two imbalanced modes. Through the addition of the Vendi force, we expect to see fast convergence as well as transitions across this barrier. We perform this set of experiments following the setup used in ~\citet{pasarkar2023vendi}. 

To compute the Vendi Score, we used the kernel $k(x,x') = 1-\frac{|x-x'|}{|x|+|x'|}$. We also use a linear annealing schedule for $\nu$, decreasing it at a constant rate to $0$ for a specified period of time. For each choice of Vendi Score, we determined the optimal hyperparameters using a grid search. In Figure \ref{fig:DW}, we used the following hyperparameters: For $q=0.1$ and $q=\infty$, we used $\nu=50$ with annealing rate $\frac{1}{50000}$. For $q=0.5$ and $q=1.$, we use $\nu=100$ with annealing rate $\frac{1}{100000}$. And finally for $q=1.5$ and $q=2.$, we used $\nu=50$ with annealing rate $\frac{1}{25000}$. $16$ particles are initialized with random positions sampled from $U[-2.5,2.5]^2$ and simulations are performed with a step-size of $10^{-2}$ for $2,000,000$ million steps. We measure convergence using the Free energy difference between the regions $\left\{x \in [-2.5,0], y \in [-4, 4] \right\}$ and $\left\{x \in [0,2.5], y \in [-4, 4] \right\}$.

The choice of Vendi Score does not noticeably affect convergence in this system, but the Vendi Score regularization is indeed quite different across scores (Fig. \ref{fig:DW}). Over the first $100,000$ steps, we find that for Vendi Scores with extreme Hill Numbers, $q=0.1$ and $q=\infty$, there is still a slow transition rate for particles across the boundary while the Vendi force is active compared to other choices of $q$. The slow transition rate is supported by the small Vendi force magnitudes for $q=0.1$ and $q=\infty$. 

We have observed that $q=0.1$ leads to a very sensitive Vendi Score, whereas $q=\infty$ gives the least sensitive Vendi Score. Yet, they provide similar effects in the Double Well setting: for $q=0.1$, samples are likely to already be considered diverse and therefore there is not much optimization necessary through the Vendi force. For large $q$, the score is determined only by the largest eigenvalue of the gram matrix $K/n$. Optimizing for the largest eigenvalue may not be informative in some systems. 

Meanwhile, for $q=0.5$ and $q=1.0$, the effect of the Vendi Force is largest, demonstrating a trade-off between the sensitivity of small and large Hill Numbers.  

\begin{figure*}[t]
    \centering
    \begin{subfigure}[b]{0.42\textwidth}
        \centering
        \includegraphics[width=\linewidth]{./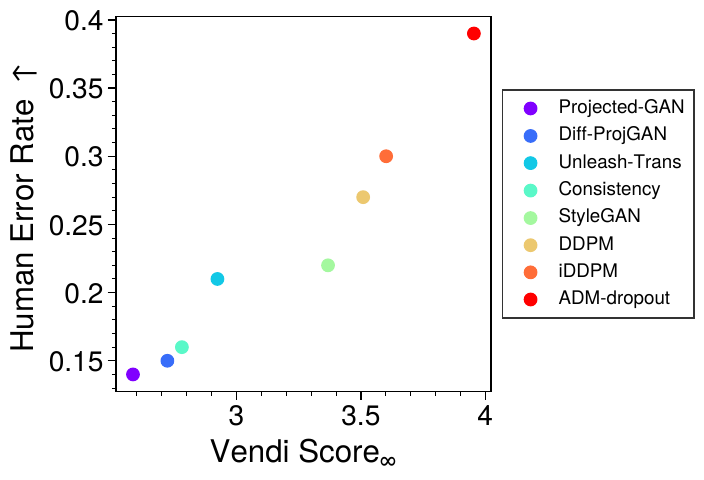}
\end{subfigure}
    \begin{subfigure}[b]{0.42\textwidth}
        \centering
        \includegraphics[width=\linewidth]{./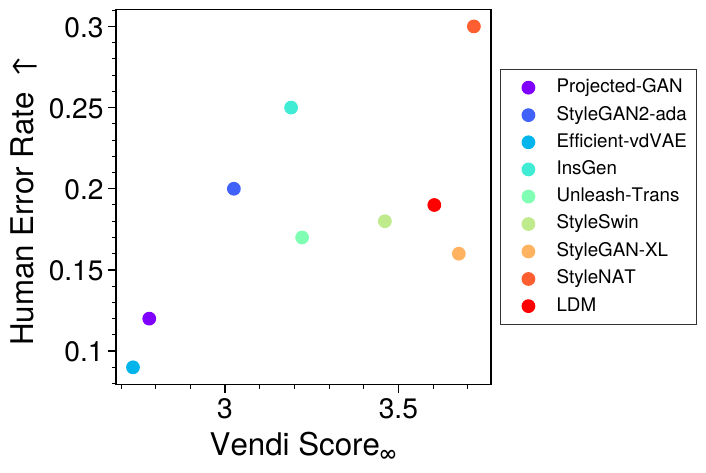}
\end{subfigure}
    \caption{Vendi Score$_\infty$ is well-correlated with human evaluation on LSUN-Bedroom (Left) and FFHQ (Right) datasets.}
    \label{fig:err_rate}
\end{figure*}

\subsection{Image Generative Model Analysis}
\label{app:images}
\citet{stein2023exposing} provided analysis of dozens of image generative models across datasets and model types. We study all models for which they provided publically available image outputs. For details regarding dataset curation and model training, we refer the reader to \citet{stein2023exposing}. 

For CIFAR-10, there were $6$ StudioGAN models used \citep{kang2023studiogan}: ACGAN \citep{odena2017conditional}, BigGAN \citep{brock2018large}, LOGAN \citep{wu2019logan}, ReACGAN \citep{kang2021rebooting}, MHGAN \citep{turner2019metropolis}, and WGAN-GP \citep{gulrajani2017improved}. Other models tested included LSGM-ODE \citep{vahdat2021score}, iDDPM-DDIM \citep{nichol2021improved}, PFGM++ \citep{xu2023pfgm++}, RESFLOW \citep{chen2019residual}, NVAE \citep{vahdat2020nvae}, StyleGAN2-ada \citep{Karras2020ada}, StyleGAN2-XL \citep{sauer2022stylegan}. 

For Imagenet, we analyzed results from the following models: ADM, ADMG, ADMG-ADMU \citep{dhariwal2021diffusion}, BigGAN \citep{brock2018large}, DiT-XL-2, GigaGAN \citep{kang2023scaling}, LDM \citep{rombach2022high}, Mask-GIT \citep{chang2022maskgit}, RQ-Transformer \citep{lee2022autoregressive}, and StyleGAN-XL \citep{sauer2022stylegan}.

For FFHQ, we used the following models: Efficient-vdVAE \citep{hazami2022efficientvdvae}, Insgen \citep{yang2021data}, LDM \citep{rombach2022high}, Projected-GAN \citep{sauer2021projected}, StyleGAN2-ada \citep{Karras2020ada}, StyleGAN2-XL \citep{sauer2022stylegan}, StyleNAT \citep{walton2022stylenat}, StyleSwin \citep{zhang2022styleswin}, and Unleashing-Transformers \citep{bond2022unleashing}.

Finally, for LSUN-Bedroom, we use the following models: Unleashing-Transformers \citep{bond2022unleashing}, Projected-GAN \citep{sauer2021projected}, ADMNet-dropout \citep{dhariwal2021diffusion}, DDPM \citep{ho2020denoising}, iDDPM \citep{nichol2021improved}, StyleGAN \citep{karras2019style}, Diffusion-projected GAN \citep{wang2022diffusion}, and Consistency \citep{meehan2020non}. 

We also show that the Human Error Rate is strongly correlated with Vendi Score$_\infty$ on the LSUN-Bedroom and FFHQ datasets in Figure \ref{fig:err_rate}.

 \end{document}